\def\Rset{\mathbb{R}}
\DeclareMathOperator*{\E}{\rm E}
\DeclareMathOperator*{\argmin}{\rm argmin}
\DeclareMathOperator{\sgn}{sgn}
\DeclareMathOperator{\Tr}{Tr}
\newcommand{\mat}[1]{{\mathbf #1}}
\newcommand{\K}{\mat{K}}
\renewcommand{\L}{\mat{L}}
\newcommand{\h}{\widehat}
\renewcommand{\u}{\mat{u}}
\renewcommand{\v}{\mat{v}}
\newcommand{\w}{\mat{w}}
\newcommand{\x}{\mat{x}}
\newcommand{\z}{\mat{z}}
\newcommand{\1}{\mat{1}}
\newcommand{\0}{\mat{0}}
\newcommand{\Alpha}{{\boldsymbol \alpha}}
\newcommand{\Alg}{{\cal A}}
\newcommand{\ignore}[1]{}
\newcommand{\EQ}{\gets}
\newcommand{\TO}{\mbox{ {\bf to }}}
\title{Perceptron Mistake Bounds}
\author{Mehryar Mohri\inst{1,2} \and Afshin Rostamizadeh\inst{1}}
\institute{
  Google Research \and Courant Institute of Mathematical Sciences
}
\begin{document}
\maketitle

\abstract{We present a brief survey of existing mistake bounds and
  introduce novel bounds for the Perceptron or the kernel Perceptron
  algorithm.  Our novel bounds generalize beyond standard margin-loss
  type bounds, allow for any convex and Lipschitz loss function, and
  admit a very simple proof.}

\section{Introduction}

\begin{figure}[t]
\begin{ALGO}{Perceptron\index{Perceptron algorithm}}{\w_0}
\SET{\w_1}{\w_0 \qquad \triangleright \text{typically } \w_ 0 = \mat{0}}
\DOFOR{t \EQ 1 \TO T}
\CALL{Receive}{\x_t}
\SET{\h y_t}{\sgn(\w_t \cdot \x_t)}
\CALL{Receive}{y_t}
\IF{(\h y_t \neq y_t)}
	\SET{\w_{t + 1}}{\w_t + y_t \x_t \quad \triangleright  \text{more generally } \eta y_t \x_t, \eta > 0.}
\ELSE
	\SET{\w_{t + 1}}{\w_t}	
\FI
\OD
\RETURN{\w_{T + 1}}
\end{ALGO}
 \caption{Perceptron\index{Perceptron algorithm} algorithm
\citep{Rosenblatt58}.}
\label{fig:perceptron_algorithm}
\end{figure}

The Perceptron algorithm belongs to the broad family of on-line
learning algorithms (see \cite{CesaBianchiLugosi2006} for a survey)
and admits a large number of variants. The algorithm learns a linear
separator by processing the training sample in an on-line fashion,
examining a single example at each iteration \citep{Rosenblatt58}.  At
each round, the current hypothesis is updated if it makes a mistake,
that is if it incorrectly classifies the new training point
processed. The full pseudocode of the algorithm is provided in
Figure~\ref{fig:perceptron_algorithm}.  In what follows, we will
assume that $\w_0 = \0$ and $\eta = 1$ for simplicity of presentation,
however, the more general case also allows for similar guarantees
which can be derived following the same methods we are presenting.

This paper briefly surveys some existing \emph{mistake bounds} for the
Perceptron algorithm and introduces new ones which can be used to
derive generalization bounds in a stochastic setting.  A mistake bound
is an upper bound on the number of updates, or the number of mistakes,
made by the Perceptron algorithm when processing a sequence of
training examples. Here, the bound will be expressed in terms of the
performance of any linear separator, including the best. Such mistake
bounds can be directly used to derive generalization guarantees for
a combined hypothesis, using existing on-line-to-batch techniques.

\section{Separable case}

The seminal work of \citet{Novikoff62} gave the first margin-based
bound for the Perceptron algorithm, one of the early results in
learning theory and probably one of the first based on the notion of
margin.  Assuming that the data is separable with some margin $\rho$,
Novikoff showed that the number of mistakes made by the Perceptron
algorithm can be bounded as a function of the normalized margin
$\rho/R$, where $R$ is the radius of the sphere containing the
training instances.  We start with a Lemma that can be used to prove
Novikoff's theorem and that will be used throughout.
\begin{lemma}
\label{lemma:1}
Let $I$ denote the set of rounds at which the Perceptron algorithm makes
an update when processing a sequence of training instances $\x_1,
\ldots, \x_T \in \Rset^N$.  Then, the following inequality holds:
\begin{equation*}
  \Big\| \sum_{t \in I} y_t \x_t \Big\| \leq \sqrt{\sum_{t \in I} \|
\x_t\|^2} \,.
\end{equation*}
\end{lemma}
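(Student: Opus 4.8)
The plan is to track the squared Euclidean norm of the Perceptron weight vector across rounds and to exploit the fact that the algorithm updates only on mistakes. First I would record the key identity that, since $\w_0 = \0$, $\w_1 = \w_0$, and the weight vector is incremented by $y_t \x_t$ exactly on the rounds $t \in I$ and left unchanged on all other rounds, the final weight vector satisfies $\w_{T+1} = \sum_{t \in I} y_t \x_t$. Thus it suffices to prove $\|\w_{T+1}\|^2 \le \sum_{t \in I} \|\x_t\|^2$ and then take square roots.

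Next, for each round $t \in I$ I would expand
\[
  \|\w_{t+1}\|^2 = \|\w_t + y_t \x_t\|^2 = \|\w_t\|^2 + 2 y_t (\w_t \cdot \x_t) + \|\x_t\|^2 ,
\]
using $y_t^2 = 1$. The crucial step is the observation that an update at round $t$ means $\h y_t = \sgn(\w_t \cdot \x_t) \neq y_t$, which forces $y_t (\w_t \cdot \x_t) \le 0$ (with equality possible when $\w_t \cdot \x_t = 0$). Hence $\|\w_{t+1}\|^2 \le \|\w_t\|^2 + \|\x_t\|^2$ on every update round, while trivially $\|\w_{t+1}\|^2 = \|\w_t\|^2$ on every non-update round.

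Finally I would sum these per-round inequalities telescopically over $t = 1, \ldots, T$: the non-update rounds contribute nothing, and since $\|\w_1\|^2 = \|\w_0\|^2 = 0$ we obtain $\|\w_{T+1}\|^2 \le \sum_{t \in I} \|\x_t\|^2$. Combined with the identity $\w_{T+1} = \sum_{t \in I} y_t \x_t$ from the first step, this gives $\big\| \sum_{t \in I} y_t \x_t \big\|^2 \le \sum_{t \in I} \|\x_t\|^2$, and taking square roots yields the claimed bound. The argument is entirely elementary; the only point that deserves care is the sign inequality $y_t (\w_t \cdot \x_t) \le 0$, which is precisely where the mistake-driven nature of the update enters, together with handling the boundary case $\w_t \cdot \x_t = 0$ consistently with the convention chosen for $\sgn$.
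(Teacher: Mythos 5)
Your proof is correct and follows essentially the same route as the paper's: identify $\w_{T+1} = \sum_{t \in I} y_t \x_t$ via the telescoping update rule, expand $\|\w_{t+1}\|^2$ on update rounds, and drop the cross term $2 y_t (\w_t \cdot \x_t) \leq 0$ using the mistake condition. Your explicit attention to the boundary case $\w_t \cdot \x_t = 0$ is a careful touch, but the argument is the same.
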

\begin{proof}
The inequality holds using the following sequence of observations,
\begin{align*}
\Big \| \sum_{t \in I} y_t \x_t \Big \|  
& = \Big \| \sum_{t \in I} (\w_{t + 1} - \w_t) \Big \| & (\text{definition of updates})\\
& = \| \w_{T + 1} \| & (\text{telescoping sum, $\w_0 = 0$})\\
& = \sqrt{\sum_{t \in I} \| \w_{t + 1} \|^2 - \| \w_t \|^2} & (\text{telescoping sum, $\w_0 = 0$})\\
& = \sqrt{\sum_{t \in I} \| \w_t  + y_t \x_t \|^2 - \| \w_t \|^2}  & (\text{definition of updates})\\
& = \sqrt{\sum_{t \in I} 2 \underbrace{y_t \w_t  \cdot \x_t}_{\leq 0}
  + \| \x_t \|^2} \\
& \leq \sqrt{\sum_{t \in I} \| \x_t \|^2} \,.
\end{align*}
The final inequality uses the fact that an update is made at round $t$ only
when the current hypothesis makes a mistake, that is, $y_t (\w_t \cdot
\x_t) \leq 0$. \qed
\end{proof}

The lemma can be used straightforwardly to derive the following mistake
bound for the separable setting.

\begin{theorem}[\citep{Novikoff62}]
\label{th:perceptron}
Let $\x_1, \ldots, \x_T \in \Rset^N$ be a sequence of $T$ points with
$\| \x_t \| \leq r$ for all $t \in [1, T]$, for some $r > 0$. Assume
that there exist $\rho > 0$ and $\v \in \Rset^N$, $\v \neq 0$, such
that for all $t \in [1, T]$, $\rho \leq \frac{y_t (\v \cdot \x_t)}{\|
  \v \|}$.  Then, the number of updates made by the
Perceptron\index{Perceptron algorithm} algorithm when processing
$\x_1, \ldots, \x_T$ is bounded by $r^2/\rho^2$.
\end{theorem}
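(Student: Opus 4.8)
The plan is to sandwich the quantity $\big\|\sum_{t \in I} y_t \x_t\big\|$ between two bounds, one depending linearly on the number of updates $M := |I|$ and one depending on $\sqrt{M}$, and then solve the resulting inequality for $M$. Lemma~\ref{lemma:1} already supplies the upper bound; the new ingredient is a matching lower bound obtained from the margin hypothesis via Cauchy--Schwarz.

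First I would derive the lower bound. Applying the Cauchy--Schwarz inequality to $\v$ and $\sum_{t \in I} y_t \x_t$ gives
\begin{equation*}
\Big\| \sum_{t \in I} y_t \x_t \Big\| \, \|\v\| \;\geq\; \v \cdot \sum_{t \in I} y_t \x_t \;=\; \sum_{t \in I} y_t (\v \cdot \x_t).
\end{equation*}
By the margin assumption, each term satisfies $y_t (\v \cdot \x_t) \geq \rho \|\v\|$, so the right-hand side is at least $M \rho \|\v\|$. Dividing through by $\|\v\| > 0$ yields $\big\|\sum_{t \in I} y_t \x_t\big\| \geq M\rho$.

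Next I would invoke Lemma~\ref{lemma:1} together with the radius bound $\|\x_t\| \leq r$ to get $\big\|\sum_{t \in I} y_t \x_t\big\| \leq \sqrt{\sum_{t \in I}\|\x_t\|^2} \leq \sqrt{M}\,r$. Chaining the two bounds gives $M\rho \leq \sqrt{M}\,r$, and since $M \geq 0$ we may divide by $\sqrt{M}$ (the case $M = 0$ being trivial) to obtain $\sqrt{M} \leq r/\rho$, hence $M \leq r^2/\rho^2$, which is the claimed bound.

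There is no serious obstacle here: the only thing to be careful about is the direction of the Cauchy--Schwarz step (we want a lower bound on the norm, so we bound the inner product from below using the margin) and the harmless edge case $M=0$. The conceptual point worth highlighting is that the Perceptron's update makes progress in the direction of $\v$ at a rate proportional to $M$, while its norm can only grow like $\sqrt{M}$, and these two facts cannot coexist once $M$ exceeds $r^2/\rho^2$.
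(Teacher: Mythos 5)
Your proof is correct and follows essentially the same route as the paper's: you chain the margin assumption, Cauchy--Schwarz, Lemma~\ref{lemma:1}, and the radius bound to obtain $M\rho \leq \sqrt{M}\,r$ and then solve for $M$. The explicit handling of the $M=0$ edge case is a minor (and harmless) addition.
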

\begin{proof}
Let $I$ denote the subset of the $T$ rounds at which there is an update,
and let $M$ be the total number of updates, i.e., $|I| = M$.
Summing up the inequalities yields:
\begin{align*}
M \rho 
\leq \frac{\v \cdot \sum_{t \in I} y_t \x_t}{\| \v \|}
 \leq \Big \| \sum_{t \in I} y_t \x_t \Big \| 
 \leq \sqrt{\sum_{t \in I} \| \x_t \|^2}
\leq \sqrt{M r^2},
\end{align*}
where the second inequality holds by the Cauchy-Schwarz inequality,
the third by Lemma~\ref{lemma:1} and the final one by assumption.
Comparing the left- and right-hand sides gives $\sqrt{M} \leq r/\rho$,
that is, $M \leq r^2/\rho^2$. \qed
\end{proof}

\section{Non-separable case}

In real-world problems, the training sample processed by the
Perceptron algorithm is typically not linearly
separable. Nevertheless, it is possible to give a margin-based mistake
bound in that general case in terms of the radius of the sphere
containing the sample and the margin-based loss of an arbitrary weight
vector. We present two different types of bounds: first, a bound that
depends on the $L_1$-norm of the vector of $\rho$-margin hinge losses,
or the vector of more general losses that we will describe, next a
bound that depends on the $L_2$-norm of the vector of margin losses,
which extends the original results presented by
\citet{FreundSchapire99}.

\subsection{$L_1$-norm mistake bounds}
\label{sec:l1}

We first present a simple proof of a mistake bound for the Perceptron
algorithm that depends on the $L_1$-norm of the losses incurred by an
arbitrary weight vector, for a general definition of the loss function
that covers the $\rho$-margin hinge loss. The family of
\emph{admissible} loss functions is quite general and defined as
follows.

\begin{definition}[$\gamma$-admissible loss function]
A \emph{$\gamma$-admissible} loss function $\phi_\gamma \colon
\Rset \to \Rset_+$ satisfies the following conditions:
\begin{enumerate}
\item The function $\phi_\gamma$ is convex.
\item $\phi_\gamma$ is non-negative: $\forall x
\in \Rset, \phi_\gamma(x) \geq 0$.
\item At zero, the $\phi_\gamma$ is strictly positive:
  $\phi_\gamma(0) > 0$.
\item $\phi_\gamma$ is $\gamma$-Lipschitz:
  $|\phi_\gamma(x) - \phi_\gamma(y)| \leq
\gamma |x - y|$, for some $\gamma > 0$.
\end{enumerate}
\end{definition}
These are mild conditions satisfied by many loss functions including
the hinge-loss, the squared hinge-loss, the Huber loss and general
$p$-norm losses over bounded domains.

\begin{theorem}
\label{th:l1}
Let $I$ denote the set of rounds at which the Perceptron algorithm
makes an update when processing a sequence of training instances
$\x_1, \ldots, \x_T \in \Rset^N$. For any vector $\u \in \Rset^{N}$ with $\|
\u \| \leq 1$ and any $\gamma$-admissible loss function $\phi_\gamma$,
consider the vector of losses incurred by $\u$:
$ \L_{\phi_\gamma}(\u) = \big[ \phi_\gamma(y_t (\u \cdot \x_t)) \big]_{t
\in I}$.
Then, the number of updates $M_T = |I|$ made by the Perceptron
algorithm can be bounded as follows:
\begin{equation}
M_T \leq \inf_{\gamma > 0, \|\u\| \leq 1}
\frac{1}{\phi_\gamma(0)} \| \L_{\phi_\gamma}(\u) \|_1 +
     \frac{\gamma}{\phi_\gamma(0)} \sqrt{\sum_{t \in I} \|\x_t\|^2} \,.
\end{equation}
If we further assume that $\| \x_t \| \leq r$ for all $t \in [1, T]$,
for some $r > 0$, this implies
\begin{equation}
M_T \leq \inf_{\gamma > 0, \|\u\| \leq 1} \bigg( \frac{\gamma r}{\phi_\gamma(0)} +
\sqrt{\frac{\|\L_{\phi_\gamma} (\u)\|_1}{\phi_\gamma(0)}} \bigg)^2 \,.
\end{equation}
\end{theorem}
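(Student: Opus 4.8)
The plan is to combine Lemma~\ref{lemma:1} with the convexity and Lipschitz properties of $\phi_\gamma$, the key device being a subgradient of $\phi_\gamma$ at $0$. Fix an arbitrary $\gamma$-admissible loss $\phi_\gamma$ and an arbitrary $\u \in \Rset^N$ with $\|\u\|\le 1$; write $a_t = y_t(\u\cdot\x_t)$ for $t\in I$. First I would form the inner product of $\u$ with $\sum_{t\in I}y_t\x_t$ and apply the Cauchy-Schwarz inequality together with Lemma~\ref{lemma:1}, obtaining
\[
  \Big|\sum_{t\in I}a_t\Big| \;=\; \Big|\u\cdot\sum_{t\in I}y_t\x_t\Big| \;\le\; \|\u\|\,\Big\|\sum_{t\in I}y_t\x_t\Big\| \;\le\; \sqrt{\sum_{t\in I}\|\x_t\|^2}\,.
\]
Note this only controls the \emph{signed} sum $\sum_{t\in I}a_t$, not $\sum_{t\in I}|a_t|$.

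Next I would linearize $\phi_\gamma$ at $0$. Since $\phi_\gamma$ is finite and convex on $\Rset$ it admits a subgradient $g_0$ at $0$, and since $\phi_\gamma$ is $\gamma$-Lipschitz one has $|g_0|\le\gamma$ (compare the subgradient inequality at $\pm t$, $t>0$, with the Lipschitz bound). Convexity then gives $\phi_\gamma(a_t)\ge\phi_\gamma(0)+g_0 a_t$ for every $t$; summing over $I$ and using $\phi_\gamma\ge 0$, $|I|=M_T$, and then the displayed bound together with $|g_0|\le\gamma$ yields
\[
  \|\L_{\phi_\gamma}(\u)\|_1 \;=\; \sum_{t\in I}\phi_\gamma(a_t) \;\ge\; M_T\,\phi_\gamma(0) + g_0\sum_{t\in I}a_t \;\ge\; M_T\,\phi_\gamma(0) - \gamma\sqrt{\sum_{t\in I}\|\x_t\|^2}\,.
\]
Since $\phi_\gamma(0)>0$, rearranging and then taking the infimum over $\gamma>0$ and $\|\u\|\le 1$ gives the first inequality.

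For the second inequality I would assume $\|\x_t\|\le r$, so that $\sqrt{\sum_{t\in I}\|\x_t\|^2}\le r\sqrt{M_T}$, turning the first bound into $M_T \le \frac{1}{\phi_\gamma(0)}\|\L_{\phi_\gamma}(\u)\|_1 + \frac{\gamma r}{\phi_\gamma(0)}\sqrt{M_T}$. Setting $X=\sqrt{M_T}$, $b=\frac{\gamma r}{\phi_\gamma(0)}$, and $c=\frac{1}{\phi_\gamma(0)}\|\L_{\phi_\gamma}(\u)\|_1$, this reads $X^2-bX-c\le 0$, hence $X\le\frac{b+\sqrt{b^2+4c}}{2}\le b+\sqrt{c}$ using $\sqrt{b^2+4c}\le b+2\sqrt{c}$; squaring and taking the infimum gives the claimed form.

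The one genuine choice is which linearization of $\phi_\gamma$ to use: the naive inequality $\phi_\gamma(0)\le\phi_\gamma(a_t)+\gamma|a_t|$ produces the term $\sum_{t\in I}|a_t|$, which Lemma~\ref{lemma:1} does not bound, whereas the subgradient inequality at $0$ produces the signed term $g_0\sum_{t\in I}a_t$, which is exactly what Cauchy-Schwarz and Lemma~\ref{lemma:1} control. After that the argument is bookkeeping plus solving one quadratic, so I do not expect any further obstacle.
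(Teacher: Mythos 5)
Your proof is correct and follows essentially the same route as the paper's: both arguments reduce everything to the \emph{signed} sum $\sum_{t\in I} y_t(\u\cdot\x_t)$, which is controlled by the Cauchy--Schwarz inequality and Lemma~\ref{lemma:1}, and both finish with the same quadratic in $\sqrt{M_T}$. The only cosmetic difference is in the linearization step: the paper applies Jensen's inequality to pass to the average point $\u\cdot\z$ and then uses the Lipschitz property there, while you apply the subgradient inequality at $0$ pointwise together with the bound $|g_0|\le\gamma$; the two yield the identical intermediate inequality $\sum_{t\in I}\phi_\gamma(y_t\u\cdot\x_t)\ge M_T\,\phi_\gamma(0)-\gamma\,\big|\sum_{t\in I}y_t\u\cdot\x_t\big|$.
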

\begin{proof}
For all $\gamma > 0$ and $\u$ with $\| \u \| \leq 1$, the following
statements hold.
By convexity of $\phi_\gamma$ we have $\frac{1}{M_T} \sum_{t \in I}
\phi_\gamma(y_t
\u \cdot \x_t) \geq \phi_\gamma( \u \cdot \z)$, where $\z = \frac{1}{M_T}
\sum_{t \in I} y_t \x_t$. Then, by using the Lipschitz property of
$\phi_\gamma$ we have,
\begin{align*}
  \phi_\gamma(\u \cdot \z) 
  & = \phi_\gamma(\u \cdot \z) - \phi_\gamma(0) + \phi_\gamma(0) \\
  & = - \big| \phi_\gamma(0) - \phi_\gamma(\u \cdot \z) \big| +
  \phi_\gamma(0) \\
  & \geq - \gamma | \u \cdot  \z | + \phi_\gamma(0) \,.
\end{align*}
Combining the two inequalities above and multiplying both sides by
$M_T$ implies
\begin{equation*}
M \phi_\gamma(0) \leq \sum_{t \in I} \phi_\gamma(y_t \u \cdot \x_t)
  + \gamma \Big| \sum_{t \in I} y_t \u \cdot \x_t \Big| \,.
\end{equation*}
Finally, using the Cauchy-Schwartz inequality and Lemma~\ref{lemma:1}
yields
\begin{equation*}
\Big| \sum_{t \in I} y_t \u \cdot \x_t \Big|
 = \Big| \u \cdot \Big(\sum_{t \in I} y_t \x_t \Big) \Big|
 \leq \| \u \| \Big\| \sum_{t \in I} y_t \x_t \Big\| \leq
\sqrt{\sum_{t \in I} \| \x_t \|^2} \,,
\end{equation*}
which completes the proof of the first statement after re-arranging
terms.

If it is further assumed that $\|\x_t\| \leq r$ for all $t \in I$,
then this implies $M \phi_\gamma(0) - r \sqrt{M} - \sum_{t \in I}
\phi_\gamma(y_t \u \cdot \x_t) \leq 0$.  Solving this quadratic
expression in terms of $\sqrt{M}$ proves the second statement.  \qed
\end{proof}

It is straightforward to see that the $\rho$-margin hinge loss
$\phi_\rho(x) = (1 - x / \rho)_+$ is $(1/\rho)$-admissible with
$\phi_\rho(0) = 1$ for all $\rho$, which gives the following
corollary.

\begin{corollary}
\label{cor:hinge}
Let $I$ denote the set of rounds at which the Perceptron algorithm
makes an update when processing a sequence of training instances
$\x_1, \ldots, \x_T \in \Rset^N$.  For any $\rho > 0$ and any $\u \in
\Rset^{N}$ with $\| \u \| \leq 1$, consider the vector of $\rho$-hinge
losses incurred by $\u$: $\L_{\rho}(\u) = \big[ (1 - \frac{y_t (\u
  \cdot \x_t)}{\rho})_+ \big]_{t \in I}$.  Then, the number of updates
$M_T = |I|$ made by the Perceptron algorithm can be bounded as
follows:
\begin{equation}
\label{eq:l10}
M_T \leq \inf_{\rho > 0 \|\u\| \leq 1}
  \| \L_{\rho}(\u) \|_1 +
     \frac{\sqrt{\sum_{t \in I} \|\x_t\|^2}}{\rho} \,.
\end{equation}
If we further assume that $\| \x_t \| \leq r$ for all $t \in [1, T]$,
for some $r > 0$, this implies
\begin{equation}
\label{eq:l1}
M_T \leq \inf_{\rho > 0, \|\u\| \leq 1} \bigg( \frac{r}{\rho} +
\sqrt{\|\L_{\rho} (\u)\|_1} \bigg)^2 \,.
\end{equation}
\end{corollary}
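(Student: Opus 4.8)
The plan is to derive the corollary directly from Theorem~\ref{th:l1} by checking that the $\rho$-margin hinge loss $\phi_\rho(x) = (1 - x/\rho)_+$ belongs to the family of admissible loss functions, with the appropriate parameters, and then substituting those parameters into the two inequalities of that theorem. First I would verify the four conditions of $\gamma$-admissibility for $\phi_\rho$. Non-negativity is immediate from the definition of $(\cdot)_+$. Convexity follows because $\phi_\rho = \max\{0,\, 1 - x/\rho\}$ is a pointwise maximum of two affine (hence convex) functions. The value at zero is $\phi_\rho(0) = (1 - 0)_+ = 1 > 0$, which gives strict positivity at zero and fixes $\phi_\gamma(0) = 1$. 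Finally, for the Lipschitz property, $\phi_\rho$ is piecewise affine with slopes in $\{0, -1/\rho\}$, so $|\phi_\rho(x) - \phi_\rho(y)| \leq \frac{1}{\rho}|x - y|$; hence $\phi_\rho$ is $(1/\rho)$-admissible, i.e.\ $\gamma = 1/\rho$.

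With these values in hand, I would simply instantiate Theorem~\ref{th:l1}. Substituting $\gamma = 1/\rho$ and $\phi_\gamma(0) = 1$ into the first bound gives $M_T \leq \inf_{\rho > 0,\, \|\u\| \leq 1} \|\L_\rho(\u)\|_1 + \frac{1}{\rho}\sqrt{\sum_{t \in I}\|\x_t\|^2}$, which is exactly \eqref{eq:l10} once one notes that the vector of losses $\L_{\phi_\rho}(\u)$ appearing in the theorem coincides with $\L_\rho(\u)$ by definition. Substituting the same values into the second bound gives $M_T \leq \inf_{\rho > 0,\, \|\u\| \leq 1}\big(\tfrac{r}{\rho} + \sqrt{\|\L_\rho(\u)\|_1}\big)^2$, which is \eqref{eq:l1}. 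The infimum over $\gamma > 0$ in Theorem~\ref{th:l1} is just reparametrized here as an infimum over $\rho > 0$, since $\rho \mapsto 1/\rho$ is a bijection of $(0,\infty)$ onto itself, so nothing is lost in the passage.

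There is essentially no obstacle here: the corollary is a direct specialization of Theorem~\ref{th:l1}, and the only point requiring care is confirming the Lipschitz constant of the $\rho$-margin hinge loss, which is $1/\rho$ rather than $1$ because of the $1/\rho$ scaling of the argument; getting this wrong would propagate an incorrect constant into both displayed bounds. I would therefore carry out the admissibility check explicitly rather than leave it implicit, and then close the proof by the substitution described above.
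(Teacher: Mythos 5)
Your proposal is correct and matches the paper's route exactly: the paper likewise observes that $\phi_\rho(x) = (1 - x/\rho)_+$ is $(1/\rho)$-admissible with $\phi_\rho(0) = 1$ and obtains the corollary by direct substitution into Theorem~\ref{th:l1}. The only difference is that the paper labels the admissibility check as ``straightforward to see'' while you spell out the four conditions, which is a harmless (and arguably welcome) elaboration.
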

The mistake bound \eqref{eq:l10} appears already in
\cite{CesaBianchiConconiGentile2004} but we could not find its proof
either in that paper or in those it references for this bound.

Another application of Theorem~\ref{th:l1} is to the squared-hinge
loss $\phi_\rho(x) = (1 - x / \rho)_+^2$.  Assume that $\|\x\| \leq
r$, then the inequality $\|y (\u \cdot \x) \| \leq \|\u\| \|\x\| \leq
r$ implies that the derivative of the hinge-loss is also bounded,
achieving a maximum absolute value $|\phi'_\rho(r)| = |\frac{2}{\rho}
(\frac{r}{\rho} - 1)| \leq \frac{2r}{\rho^2}$.  Thus, the
$\rho$-margin squared hinge loss is $(2r / \rho^2)$-admissible with
$\phi_\rho(0) = 1$ for all $\rho$. This leads to the following
corollary.

\begin{corollary}
  Let $I$ denote the set of rounds at which the Perceptron algorithm
  makes an update when processing a sequence of training instances
  $\x_1, \ldots, \x_T \in \Rset^N$ with $\|\x_t\| \leq r$ for all $t
  \in [1, T]$.  For any $\rho > 0$ and any $\u \in \Rset^{N}$ with $\|
  \u \| \leq 1$, consider the vector of $\rho$-margin squared hinge
  losses incurred by $\u$: $\L_{\rho}(\u) = \big[ (1 - \frac{y_t (\u
    \cdot \x_t)}{\rho})^2_+ \big]_{t \in I}$.  Then, the number of
  updates $M_T = |I|$ made by the Perceptron algorithm can be bounded
  as follows:
\begin{equation}
M_T \leq \inf_{\rho > 0 \|\u\| \leq 1}
  \| \L_{\rho}(\u) \|_1 +
     \frac{2r \sqrt{\sum_{t \in I} \|\x_t\|^2}}{\rho^2} \,.
\end{equation}
This also implies
\begin{equation}
M_T \leq \inf_{\rho > 0, \|\u\| \leq 1} \bigg( \frac{2r^2}{\rho^2} +
\sqrt{\|\L_{\rho} (\u)\|_1} \bigg)^2 \,.
\end{equation}
\end{corollary}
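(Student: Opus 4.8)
The plan is to obtain this corollary purely by instantiating Theorem~\ref{th:l1} with the $\rho$-margin squared hinge loss $\phi_\rho(x) = (1 - x/\rho)_+^2$; the substantive content is only to check that this $\phi_\rho$ is $\gamma$-admissible with $\gamma = 2r/\rho^2$ and $\phi_\rho(0) = 1$, after which both displayed inequalities are read off directly. For the four conditions of the definition: non-negativity and strict positivity at $0$ are immediate, since $\phi_\rho$ is a square, so $\phi_\rho \ge 0$, and $\phi_\rho(0) = (1-0)_+^2 = 1 > 0$; convexity holds because $x \mapsto (1-x/\rho)_+$ is convex and non-negative while $t \mapsto t^2$ is convex and non-decreasing on $\Rset_+$, so the composition $\phi_\rho$ is convex (equivalently, $\phi_\rho$ is piecewise quadratic with $\phi_\rho'' \ge 0$ and a convex kink at $x = \rho$); and for the Lipschitz condition I would compute $\phi_\rho'(x) = -\frac{2}{\rho}(1-x/\rho)_+$, so that $|\phi_\rho'(x)| = \frac{2}{\rho}|1-x/\rho|$ for $x < \rho$ and $0$ otherwise.

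The one delicate point — and the only real obstacle — is that, unlike the plain hinge loss, the squared hinge loss is \emph{not} globally $O(1/\rho)$-Lipschitz, so the hypothesis $\|\x_t\| \le r$ must be used. Combined with $\|\u\| \le 1$ and Cauchy--Schwarz, every argument of $\phi_\rho$ appearing in the proof of Theorem~\ref{th:l1} — the quantities $y_t(\u\cdot\x_t)$ and the point $\u\cdot\z$ with $\z = \frac{1}{M_T}\sum_{t\in I} y_t\x_t$, which has $\|\z\| \le \frac{1}{M_T}\sqrt{\sum_{t\in I}\|\x_t\|^2} \le r$ by Lemma~\ref{lemma:1} — lies in a bounded interval on which, as computed in the paragraph preceding the statement, $|\phi_\rho'| \le 2r/\rho^2$. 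Hence $\phi_\rho$ is $(2r/\rho^2)$-admissible in the sense required.

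Finally I would substitute $\phi_\gamma = \phi_\rho$, $\gamma = 2r/\rho^2$, and $\phi_\gamma(0) = 1$ into Theorem~\ref{th:l1}. Since $\gamma/\phi_\gamma(0) = 2r/\rho^2$ and $\gamma r/\phi_\gamma(0) = 2r^2/\rho^2$, the first inequality of that theorem becomes $M_T \le \inf_{\rho > 0,\, \|\u\|\le 1} \|\L_\rho(\u)\|_1 + \frac{2r}{\rho^2}\sqrt{\sum_{t\in I}\|\x_t\|^2}$, and the second becomes $M_T \le \inf_{\rho > 0,\, \|\u\|\le 1} \big(\frac{2r^2}{\rho^2} + \sqrt{\|\L_\rho(\u)\|_1}\big)^2$, which are exactly the two claimed bounds. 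Nothing beyond the Lipschitz-constant bookkeeping just described is needed.
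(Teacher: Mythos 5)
Your reduction to Theorem~\ref{th:l1} is exactly the route the paper takes, and your extra observation that every argument of $\phi_\rho$ appearing in that proof (the quantities $y_t(\u\cdot\x_t)$ and the point $\u\cdot\z$) lies in $[-r,r]$ is a detail the paper glosses over and is worth making explicit. The problem is the one quantitative step you defer to the paper: the claim that $|\phi_\rho'|\le 2r/\rho^2$ on that interval is false, and your own formula $\phi_\rho'(x)=-\frac{2}{\rho}(1-x/\rho)_+$ already shows it. At $x=0$ one gets $|\phi_\rho'(0)|=2/\rho$, which exceeds $2r/\rho^2$ as soon as $\rho>r$; the maximum of $|\phi_\rho'|$ over $[-r,r]$ is attained at $x=-r$ and equals $\frac{2}{\rho}\big(1+\frac{r}{\rho}\big)=\frac{2(\rho+r)}{\rho^2}$, which is never $\le\frac{2r}{\rho^2}$. (The paper evaluates the derivative at $x=+r$, where it is \emph{smallest} in absolute value on the relevant range, and even that value is $\le 2r/\rho^2$ only when $\rho\le 2r$.) This is not cosmetic: with this constant the stated bound can fail. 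Take a single update round with $\x_t=r e_1$, $y_t=1$, so $M_T=1$, and choose $\u=e_1$, $\rho=2r$; then $\|\L_\rho(\u)\|_1=(1-\tfrac12)^2=\tfrac14$ while $\frac{2r}{\rho^2}\sqrt{\sum_{t\in I}\|\x_t\|^2}=\frac{2r}{4r^2}\cdot r=\tfrac12$, so the first claimed inequality would give $1\le\tfrac34$.

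So the gap is a wrong Lipschitz constant --- inherited from the paper, but reproduced rather than caught, even though your explicit derivative computation puts the contradiction in plain view. The argument goes through verbatim if you replace $2r/\rho^2$ by $\frac{2(\rho+r)}{\rho^2}$, a genuine bound on $|\phi_\rho'|$ over $[-r,r]$, yielding $M_T\le\|\L_\rho(\u)\|_1+\frac{2(\rho+r)}{\rho^2}\sqrt{\sum_{t\in I}\|\x_t\|^2}$ and the corresponding quadratic form of the second bound. Alternatively, note that the proof of Theorem~\ref{th:l1} only uses the one-sided inequality $\phi_\gamma(0)-\phi_\gamma(\u\cdot\z)\le\gamma\,|\u\cdot\z|$; for the squared hinge loss this holds globally with $\gamma=2/\rho$ (it is trivial for $\u\cdot\z\le 0$, where $\phi_\rho(\u\cdot\z)\ge 1$, and elementary otherwise), which gives the cleaner, $r$-independent constant $2/\rho$ in place of $2r/\rho^2$.
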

Theorem~\ref{th:l1} can be similarly used to derive mistake bounds in
terms of other admissible losses.

\subsection{$L_2$-norm mistake bounds}

The original results of this section are due to
\citet{FreundSchapire99}.  Here, we extend their proof to derive finer
mistake bounds for the Perceptron algorithm in terms of the $L_2$-norm
of the vector of hinge losses of an arbitrary weight vector at points
where an update is made.

\begin{theorem}
\label{th:l2}
Let $I$ denote the set of rounds at which the Perceptron algorithm
makes an update when processing a sequence of training instances
$\x_1, \ldots, \x_T \in \Rset^N$.  For any $\rho > 0$ and any $\u \in
\Rset^{N}$ with $\| \u \| \leq 1$, consider the vector of $\rho$-hinge
losses incurred by $\u$: $\L_{\rho}(\u) = \big[ (1 - \frac{y_t (\u
  \cdot \x_t)}{\rho})_+ \big]_{t \in I}$.  Then, the number of updates
$M_T = |I|$ made by the Perceptron algorithm can be bounded as
follows:
\begin{equation}
M_T \leq \inf_{\rho > 0, \|\u\| \leq 1}
  \left( \frac{\| \L_\rho(\u) \|_2}{2} + 
  \sqrt{\frac{\| \L_\rho(\u) \|_2^2}{4} +
     \frac{\sqrt{\sum_{t \in I} \|\x_t\|^2}}{\rho}}
  \right)^2 \,.
\end{equation}
If we further assume that $\| \x_t \| \leq r$ for all $t \in [1, T]$,
for some $r > 0$, this implies
\begin{equation}
\label{eq:l2}
M_T \leq \inf_{\rho > 0, \|\u\| \leq 1} \bigg( \frac{r}{\rho} +
\|\L_\rho (\u)\|_2 \bigg)^2 \,.
\end{equation}
\end{theorem}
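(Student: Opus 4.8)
The plan is to adapt the feature-space augmentation argument of \citet{FreundSchapire99}, using Lemma~\ref{lemma:1} as the underlying geometric estimate but applying it in an enlarged Euclidean space. Fix $\rho > 0$, a comparator $\u \in \Rset^N$ with $\|\u\| \leq 1$, and an auxiliary scalar $\Delta > 0$ to be optimized only at the very end. Write $d_t = (1 - y_t(\u\cdot\x_t)/\rho)_+$ for the $t$-th entry of $\L_\rho(\u)$, so that $y_t(\u\cdot\x_t) \geq \rho - \rho d_t$ for every $t \in I$ and $\sum_{t\in I} d_t^2 = \|\L_\rho(\u)\|_2^2$. Enumerate $I = \{t_1 < \dots < t_M\}$ and embed instances into $\Rset^{N+M}$: let $\wt\x_{t_i}$ agree with $\x_{t_i}$ on the first $N$ coordinates, equal $\Delta$ on coordinate $N+i$, and vanish on the remaining new coordinates. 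Define the augmented comparator $\wt\u \in \Rset^{N+M}$ to agree with $\u$ on the first $N$ coordinates and to have $y_{t_i}\rho d_{t_i}/\Delta$ on coordinate $N+i$.

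First I would check that running the Perceptron on $(\wt\x_t)_t$ produces exactly the same sequence of updates as running it on $(\x_t)_t$: at round $t_i$ the only new coordinate of $\wt\x_{t_i}$ that is nonzero is coordinate $N+i$, and no update prior to $t_i$ has touched that coordinate of the weight vector, so $\sgn(\w_{t_i}\cdot\wt\x_{t_i}) = \sgn(\w_{t_i}\cdot\x_{t_i})$; an easy induction then shows the two runs coincide, so the set $I$ and the count $M$ are unchanged. Hence Lemma~\ref{lemma:1} applies verbatim in $\Rset^{N+M}$, giving $\big\|\sum_{t\in I} y_t\wt\x_t\big\| \leq \sqrt{\sum_{t\in I}\|\wt\x_t\|^2}$. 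Next I would record the elementary facts, using $y_{t_i}^2 = 1$,
\[ \|\wt\x_t\|^2 = \|\x_t\|^2 + \Delta^2,\qquad y_{t_i}(\wt\u\cdot\wt\x_{t_i}) = y_{t_i}(\u\cdot\x_{t_i}) + \rho d_{t_i} \geq \rho,\qquad \|\wt\u\|^2 \leq 1 + \frac{\rho^2\|\L_\rho(\u)\|_2^2}{\Delta^2}, \]
and then chain Cauchy--Schwarz with the augmented form of Lemma~\ref{lemma:1}:
\[ M\rho \;\leq\; \sum_{t\in I} y_t(\wt\u\cdot\wt\x_t) \;=\; \wt\u\cdot\Big(\sum_{t\in I} y_t\wt\x_t\Big) \;\leq\; \|\wt\u\|\,\sqrt{\textstyle\sum_{t\in I}\|\x_t\|^2 + M\Delta^2}\,. \]

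Writing $R = \sqrt{\sum_{t\in I}\|\x_t\|^2}$ and $D = \|\L_\rho(\u)\|_2$, squaring gives $M^2\rho^2 \leq (1 + \rho^2 D^2/\Delta^2)(R^2 + M\Delta^2)$. The remaining step is to minimize the right-hand side over $\Delta > 0$: expanding it as $R^2 + \rho^2 D^2 M + M\Delta^2 + \rho^2 D^2 R^2/\Delta^2$ and choosing $\Delta^2 = \rho D R/\sqrt{M}$ (a legitimate choice, since $M$ is a fixed quantity once the run is over) collapses the bound to the perfect square $(R + \sqrt{M}\,\rho D)^2$, whence $M\rho \leq R + \sqrt{M}\,\rho D$. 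Viewing this as the quadratic inequality $\rho(\sqrt M)^2 - \rho D\,\sqrt M - R \leq 0$ in $\sqrt M$ and solving gives $\sqrt M \leq D/2 + \sqrt{D^2/4 + R/\rho}$, which after squaring and taking the infimum over $\rho$ and $\u$ is the first claimed bound. For the second bound I would instead plug $R \leq \sqrt M\,r$ into $M\rho \leq R + \sqrt M\,\rho D$, obtaining $\sqrt M\,\rho \leq r + \rho D$, i.e. $\sqrt M \leq r/\rho + \|\L_\rho(\u)\|_2$, and hence $M_T \leq (r/\rho + \|\L_\rho(\u)\|_2)^2$. The conceptual crux is the choice of the augmented comparator $\wt\u$, which must simultaneously realize margin exactly $\rho$ on every updated round and inflate the norm by precisely the $L_2$-norm of the slacks; the $\Delta$-minimization is routine calculus but is where sign and bookkeeping errors are easiest to make, and the trivial cases $M=0$, $D=0$, or $R=0$ should be disposed of separately.
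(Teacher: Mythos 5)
Your proof is correct and follows essentially the same route as the paper: the Freund--Schapire embedding with a fresh coordinate set to $\Delta$ per update round, a comparator augmented by $y_t\rho d_t/\Delta$, Lemma~\ref{lemma:1} in the enlarged space, optimization of $\Delta$, and the resulting quadratic in $\sqrt{M}$. The only (inessential) differences are that you keep the comparator unnormalized and apply Cauchy--Schwarz directly rather than dividing by $Z$, use only $M$ extra coordinates instead of $T$, and are slightly more careful than the paper in using the positive part $(\cdot)_+$ in the augmented coordinates.
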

\begin{proof}
We first reduce the problem to the separable case by mapping 
each input vector $\x_t \in \Rset^N$ to a vector in $\x'_t \in \Rset^{N + T}$
as follows:
\begin{equation*}
\x_t = 
\begin{bmatrix}
x_{t, 1}\\
\vdots\\
x_{t, N}
\end{bmatrix}
\mapsto 
\x'_t = 
\begin{bmatrix}
x_{t, 1} &
\ldots &
x_{t, N} &
0 &
\ldots &
0 &
\underbrace{\Delta}_{\substack{\text{$(N + t)$th}\\ \text{component}}} &
0 &
\ldots &
0 
\end{bmatrix}^\top\,,
\end{equation*}
where the first $N$ components of $\x'_t$ coincide with those of $\x$
and the only other non-zero component is the $(N + t)$th component
which is set to $\Delta$, a parameter $\Delta$ whose value will be
determined later. Define $l_t$ by $l_t = (1 - \frac{y_t \u \cdot
  \x_t}{\rho}) \1_{t \in I}$. Then, the vector $\u$ is replaced by the
vector $\u'$ defined by
\begin{equation*}
\u' = 
\begin{bmatrix}
\frac{u_{1}}{Z} ~&
\ldots ~&
\frac{u_{N}}{Z} ~&
\frac{y_1 l_1 \rho}{\Delta Z} ~&
\ldots ~&
\frac{y_T l_T \rho}{\Delta Z} &
\end{bmatrix}^\top.
\end{equation*}
The first $N$ components of $\u'$ are equal to the components of $\u /
Z$ and the remaining $T$ components are functions of the labels and
hinge losses. The normalization factor $Z$ is chosen to guarantee that
$\| \u' \| = 1$: $Z = \sqrt{1 + \frac{\rho^2
    \|\L_\rho(\u)\|^2}{\Delta^2}}$.  Since the additional coordinates
of the instances are non-zero exactly once, the predictions made by
the Perceptron\index{Perceptron algorithm} algorithm for $\x'_t$, $t
\in [1, T]$ coincide with those made in the original space for $\x_t$,
$t \in [1, T]$. In particular, a change made to the additional
coordinates of $\w'$ does no affect any subsequent prediction.
Furthermore, by definition of $\u'$ and $\x'_t$, we can write for any
$t \in I$:
\begin{align*}
y_t (\u' \cdot \x'_t) & = y_t \Big(\frac{\u \cdot \x_t }{Z} + \Delta
\frac{y_t l_t \rho}{Z \Delta} \Big)\\
& = \frac{y_t \u \cdot \x_t }{Z} + \frac{l_t \rho}{Z}\\
& \geq \frac{y_t \u \cdot \x_t }{Z} + \frac{\rho - y_t (\u \cdot \x_t)}{Z} =
\frac{\rho}{Z},
\end{align*} 
where the inequality results from the definition of
$l_t$.  Summing up the inequalities for all $t \in I$ and using
Lemma~\ref{lemma:1} yields $M_T \frac{\rho}{Z} \leq \sum_{t \in I} y_t
(\u' \cdot \x'_t) \leq \sqrt{ \sum_{t \in I} \|\x'_t\|^2}$.
Substituting the value of $Z$ and re-writing in terms of $\x$ implies:
\begin{align*}
  M_T^2 
& \leq \Big(\frac{1}{\rho^2} +
\frac{\|\L_\rho(\u)\|^2}{\Delta^2} \Big) \Big(R^2 + M_T \Delta^2 \Big)
\\
& = \frac{R^2}{\rho^2} + \frac{R^2 \|\L_\rho(\u)\|^2}{\Delta^2} +
\frac{M_T \Delta^2}{\rho^2} + M^T \|\L_\rho(\u)\|^2 \,,
\end{align*}
where $R = \sqrt{\sum_{t \in I} \|\x_t\|^2}$. Now, solving for
$\Delta$ to minimize this bound gives $\Delta^2 = \frac{\rho
\|\L_\rho(\u)\| R }{\sqrt{M_T}}$ and further simplifies the bound
\begin{align*}
M_T^2 
 & \leq \frac{R^2}{\rho^2} + 2 \frac{\sqrt{M_T} \|\L_\rho(\u)\|
R}{\rho} + M_T \|\L_\rho(\u)\|^2 \\
 & = \Big( \frac{R}{\rho} + \sqrt{M_T} \|\L_\rho(\u)\|_2 \Big)^2 \,.
\end{align*}
Solving the second-degree inequality $M_T - \sqrt{M_T}
\|\L_\rho(\u)\|_2 - \frac{R}{\rho} \leq 0$ proves the first statement
of the theorem.  The second theorem is obtained by first bounding $R$
with $r \sqrt{M_T}$ and then solving the second-degree inequality.
\qed
\end{proof} 

\subsection{Discussion}

One natural question this survey raises is the respective quality of
the $L_1$- and $L_2$-norm bounds. The comparison of \eqref{eq:l1} and
\eqref{eq:l2} for the $\rho$-margin hinge loss shows that, for a fixed
$\rho$, the bounds differ only by the following two quantities:
\begin{align*}
 \min_{\|\u\| \leq 1} \|\L_\rho(\u)\|_1
  & = \min_{\|\u\| \leq 1} \sum_{t \in I} (1 - y_t \big(\u \cdot \x_t) / \rho \big)_+\\
  \min_{\|\u\| \leq 1} \|\L_\rho(\u)\|_2^2 
  & = \min_{\|\u\| \leq 1} \sum_{t \in I} (1 - y_t \big(\u \cdot \x_t) / \rho \big)_+^2.
\end{align*}
These two quantities are data-dependent and in general not
comparable. For a vector $\u$ for which the individual losses $(1 -
y_t \big(\u \cdot \x_t))$ are all less than one, we have
$\|\L_\rho(\u)\|_2^2 \leq \|\L_\rho(\u)\|_1$, while the contrary holds
if the individual losses are larger than one.

\section{Generalization Bounds}

In this section, we consider the case where the training sample
processed is drawn according to some distribution $D$.  Under some
mild conditions on the loss function, the hypotheses returned by an
on-line learning algorithm can then be combined to define a hypothesis
whose generalization error can be bounded in terms of its regret. Such
a hypothesis can be determined via cross-validation
\cite{Littlestone89} or using the online-to-batch theorem of
\cite{CesaBianchiConconiGentile2004}. The latter can be
combined with any of the mistake bounds presented in the previous
section to derive generalization bounds for the Perceptron predictor.

Given $\delta > 0$, a sequence of labeled examples $(x_1, y_1),
\ldots, (y_T, x_T)$, a sequence of hypotheses $h_1, \ldots, h_T$, and
a loss function $L$, define the \emph{penalized risk minimizing
  hypothesis} as $\h h = h_{i^*}$ with
\begin{equation*}
i^* = \argmin_{i \in [1, T]} \frac{1}{T - i + 1}\sum_{t = i}^{T} L(y_t
h_i(x_t)) + \sqrt{\frac{\log \frac{T(T + 1)}{\delta}}{2(T - i + 1)} }.
\end{equation*}
The following theorem gives a bound on the expected loss of $\h h$ on
future examples.

\begin{theorem}[\cite{CesaBianchiConconiGentile2004}]
\label{th:online2batch_convex}
Let $S$ be a labeled sample $((x_1, y_1), \ldots, (x_T, y_T))$ drawn
i.i.d.\ according to $D$, $L$ a loss function bounded by one, and
$h_1, \ldots, h_T$ the sequence of hypotheses generated by an on-line
algorithm $\Alg$ sequentially processing $S$.  Then, for any $\delta >
0$, with probability at least $1 - \delta$, the following holds:
\begin{equation}
  \E_{(x, y) \sim D} [ L(y \h h(x)) ] \leq \frac{1}{T}\sum_{i=1}^T L(y_i h_i(x_i)) 
  + 6 \sqrt{\frac{1}{T} \log \frac{2 (T + 1)}{\delta}} \,.
\label{eq:online2batch_nonconvex}
\end{equation}
\end{theorem}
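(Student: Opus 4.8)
The plan is to combine three ingredients: (i) a uniform-over-hypotheses concentration bound tying each $h_i$'s empirical tail risk to its true risk; (ii) the defining optimality of the index $i^*$; and (iii) a Hoeffding--Azuma martingale bound relating the average true risk of $h_1,\dots,h_T$ to the empirical cumulative loss $\frac1T\sum_i L(y_i h_i(x_i))$ that appears on the right-hand side. The one structural fact that drives everything is that $h_i$ is a function of $(x_1,y_1),\dots,(x_{i-1},y_{i-1})$ only, so conditioned on this prefix it is a fixed predictor and $(x_i,y_i),\dots,(x_T,y_T)$ are i.i.d.\ from $D$.

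Writing $R(h)=\E_{(x,y)\sim D}[L(y\,h(x))]$, $\widehat R_i=\frac1{T-i+1}\sum_{t=i}^T L(y_t h_i(x_t))$, and $c_i=\sqrt{\frac{\log(T(T+1)/\delta)}{2(T-i+1)}}$, I would first apply Hoeffding's inequality conditionally to the $T-i+1$ i.i.d.\ $[0,1]$ variables $L(y_t h_i(x_t))$, obtaining $\Pr[R(h_i)>\widehat R_i+c_i]\le\frac{\delta}{T(T+1)}$ and, symmetrically, $\Pr[\widehat R_i>R(h_i)+c_i]\le\frac{\delta}{T(T+1)}$. A union bound over $i\in[1,T]$ makes the two-sided statement ``$|\widehat R_i-R(h_i)|\le c_i$ for all $i$'' fail with probability at most $\frac{2\delta}{T+1}$; this uniformity is exactly what licenses using the upper bound at the data-dependent index $i^*$. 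Separately I would invoke the Hoeffding--Azuma inequality for the martingale difference sequence $D_i=L(y_i h_i(x_i))-R(h_i)$, which has conditional mean zero given the prefix and values in an interval of length one, giving, with probability at least $1-\delta/2$, $\frac1T\sum_{i=1}^T R(h_i)\le\frac1T\sum_{i=1}^T L(y_i h_i(x_i))+O\!\big(\sqrt{\tfrac1T\log\tfrac1\delta}\big)$.

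On the intersection of these good events I would then run the chain $R(\h h)=R(h_{i^*})\le\widehat R_{i^*}+c_{i^*}\le\frac1T\sum_{j=1}^T(\widehat R_j+c_j)\le\frac1T\sum_{j=1}^T R(h_j)+\frac2T\sum_{j=1}^T c_j$, where the first step is the uniform upper confidence bound, the second uses that $i^*$ minimizes $\widehat R_j+c_j$ and hence lies below the average over $j$, and the third uses the uniform lower confidence bounds. Plugging in the Azuma bound and estimating $\sum_{j=1}^T c_j\le\sqrt{\tfrac12\log(T(T+1)/\delta)}\,\sum_{k=1}^T k^{-1/2}\le 2\sqrt{T}\sqrt{\tfrac12\log(T(T+1)/\delta)}$ yields $R(\h h)\le\frac1T\sum_i L(y_i h_i(x_i))+O\!\big(\sqrt{\tfrac1T\log\tfrac{T(T+1)}\delta}\big)$; bounding $\log(T(T+1)/\delta)\le 2\log(2(T+1)/\delta)$ and collecting absolute constants produces the stated $6\sqrt{\frac1T\log\frac{2(T+1)}\delta}$, with total failure probability $\frac{2\delta}{T+1}+\frac\delta2\le\delta$ for $T\ge3$ (and the claimed inequality trivially true, since $L\le1$, for the remaining small $T$).

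The step I expect to be the main obstacle is the confidence accounting: one must choose the penalties $c_i$ so that the two uniform Hoeffding bounds and the single martingale bound together consume only $\delta$, while simultaneously ensuring that the induced $\frac2T\sum_j c_j$ term — after the $\sum_k k^{-1/2}\le2\sqrt T$ estimate — still collapses to the same $O(\sqrt{\log(1/\delta)/T})$ order as the Azuma slack, so that both fold cleanly into the single constant $6$. By contrast the conceptual content is light: penalized risk minimization together with the trivial ``minimum $\le$ average'' step is what converts the unknown risk of the selected hypothesis into the empirical cumulative loss on the right-hand side.
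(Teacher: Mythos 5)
The paper does not actually prove this theorem: it is imported verbatim from \cite{CesaBianchiConconiGentile2004}, so there is no in-paper proof to compare against. Your reconstruction follows the standard argument of that reference — conditional Hoeffding bounds for each $h_i$ on its suffix (valid because $h_i$ is measurable with respect to the prefix, making the suffix losses conditionally i.i.d.), a union bound matched to the penalty $c_i$, the ``penalized minimum $\leq$ average'' step, and Hoeffding--Azuma for the martingale differences $L(y_ih_i(x_i))-R(h_i)$ — and it is correct. The constant accounting also checks out: $\frac{2}{T}\sum_j c_j \leq 2\sqrt{2}\sqrt{\log(T(T+1)/\delta)/T} \leq 4\sqrt{\log(2(T+1)/\delta)/T}$, the Azuma slack contributes at most $\sqrt{\log(2/\delta)/(2T)}$, so the total is below $6\sqrt{\frac{1}{T}\log\frac{2(T+1)}{\delta}}$ with failure probability $\frac{2\delta}{T+1}+\frac{\delta}{2}\leq\delta$ for $T\geq 3$, and the bound exceeds $1$ (hence is vacuous but true) for $T\leq 2$.
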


Note that this theorem does not require the loss function to be
convex.  Thus, if $L$ is the zero-one loss, then the empirical loss
term is precisely the average number of mistakes made by the
algorithm.  Plugging in any of the mistake bounds from the previous
sections then gives us a learning guarantee with respect to the
performance of the best hypothesis as measured by a margin-loss (or
any $\gamma$-admissible loss if using Theorem~\ref{th:l1}).  Let $\h
\w$ denote the weight vector corresponding to the penalized risk
minimizing Perceptron hypothesis chosen from all the intermediate
hypotheses generated by the algorithm. Then, in view of
Theorem~\ref{th:l1}, the following corollary holds.

\begin{corollary}
\label{cor:l1generalization}
Let $I$ denote the set of rounds at which the Perceptron algorithm
makes an update when processing a sequence of training instances
$\x_1, \ldots, \x_T \in \Rset^N$. For any vector $\u \in \Rset^{N}$
with $\| \u \| \leq 1$ and any $\gamma$-admissible loss function
$\phi_\gamma$, consider the vector of losses incurred by $\u$: $
\L_{\phi_\gamma}(\u) = \big[ \phi_\gamma(y_t (\u \cdot \x_t)) \big]_{t
  \in I}$. Then, for any $\delta > 0$, with probability at least $1 -
\delta$, the following generalization bound holds for the penalized
risk minimizing Perceptron hypothesis $\h \w$:
\begin{multline*}
\Pr_{(x, y) \sim D}[y (\h \w \cdot \x) < 0]  \\
  \leq 
\inf_{\gamma > 0, \|\u\| \leq 1}
\frac{\| \L_{\phi_\gamma}(\u) \|_1}{\phi_\gamma(0) T} +
     \frac{\gamma \sqrt{\sum_{t \in I} \|\x_t\|^2}}{\phi_\gamma(0) T} 
  + 6 \sqrt{\frac{1}{T} \log \frac{2 (T + 1)}{\delta}} \,.
\end{multline*}
\end{corollary}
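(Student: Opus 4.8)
The plan is to obtain the corollary by chaining the $L_1$-norm mistake bound of Theorem~\ref{th:l1} with the online-to-batch conversion of Theorem~\ref{th:online2batch_convex}. First I would instantiate Theorem~\ref{th:online2batch_convex} with $L$ taken to be the zero-one loss $L(y\, h(x)) = \1_{y\, h(x) < 0}$, which is bounded by one as required, and with $h_1, \ldots, h_T$ the sequence of linear threshold hypotheses $h_t(x) = \sgn(\w_t \cdot x)$ produced by the Perceptron when processing $S$. By definition the penalized risk minimizing hypothesis is $\h h = h_{i^*}$ for some $i^* \in [1,T]$, so it corresponds to the weight vector $\h\w = \w_{i^*}$, and $\E_{(x,y)\sim D}[L(y\,\h h(x))] = \Pr_{(x,y)\sim D}[y(\h\w\cdot x) < 0]$, which is exactly the left-hand side of the claimed bound.

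Next I would identify the empirical term in \eqref{eq:online2batch_nonconvex} with the mistake count. For the zero-one loss, $\frac{1}{T}\sum_{i=1}^T L(y_i h_i(x_i))$ equals the fraction of rounds on which $\h y_i = \sgn(\w_i\cdot\x_i) \neq y_i$, and by the definition of the Perceptron update this is precisely the set $I$ of update rounds; hence the term equals $M_T/T = |I|/T$. Therefore, with probability at least $1-\delta$,
\[
\Pr_{(x,y)\sim D}[y(\h\w\cdot x) < 0] \leq \frac{M_T}{T} + 6\sqrt{\tfrac{1}{T}\log\tfrac{2(T+1)}{\delta}}\,.
\]

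Finally I would plug in the mistake bound of Theorem~\ref{th:l1}, namely $M_T \leq \inf_{\gamma>0,\,\|\u\|\leq1}\big(\frac{1}{\phi_\gamma(0)}\|\L_{\phi_\gamma}(\u)\|_1 + \frac{\gamma}{\phi_\gamma(0)}\sqrt{\sum_{t\in I}\|\x_t\|^2}\big)$, divide through by $T$, and add the concentration term. Since the infimum ranges over quantities that do not depend on the randomness governing the $6\sqrt{\cdot}$ term, it can be pulled outside, yielding the stated inequality.

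The main thing to be careful about — and it is minor — is the alignment of the event $\{y(\w\cdot x)<0\}$ appearing in the conclusion with the zero-one loss used in Theorem~\ref{th:online2batch_convex} and with the mistake condition $y(\w_t\cdot\x_t)\leq0$ used to define updates in Lemma~\ref{lemma:1}; the only discrepancy is the boundary case $y(\w\cdot x)=0$, which is handled either by the convention (as in the pseudocode) that ties count as mistakes or by noting this boundary event carries no extra weight. Beyond this bookkeeping, no new estimates are needed: the corollary is a direct composition of the two cited results.
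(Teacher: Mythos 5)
Your proposal is correct and matches the paper's own (implicit) argument: the paper obtains this corollary exactly by instantiating Theorem~\ref{th:online2batch_convex} with the zero-one loss, identifying the empirical term with $M_T/T$, and plugging in the first bound of Theorem~\ref{th:l1}. Your handling of the boundary case $y(\w\cdot x)=0$ is a reasonable bookkeeping remark that the paper does not even bother to make explicit.
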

Any $\gamma$-admissible loss can be used to derive a more explicit
form of this bound in special cases, in particular the hinge loss
or the squared hinge loss.
Using Theorem~\ref{th:l2}, we obtain the following $L_2$-norm
generalization bound.

\begin{figure}[t]
\begin{ALGO}{KernelPerceptron\index{Perceptron algorithm!kernel}}{\Alpha_0}
\SET{\Alpha}{\Alpha_0 \qquad \triangleright \text{typically } \Alpha_ 0 = \mat{0}}
\DOFOR{t \EQ 1 \TO T}
\CALL{Receive}{x_t}
\SET{\h y_t}{\sgn(\sum_{s = 1}^T \alpha_s y_s K(x_s, x_t) )}
\CALL{Receive}{y_t}
\IF{(\h y_t \neq y_t)}
	\SET{\alpha_{t}}{\alpha_t + 1}
\FI
\OD
\RETURN{\Alpha}
\end{ALGO}
\caption{Kernel Perceptron\index{Perceptron algorithm!kernel} algorithm for PDS kernel $K$.}
\label{fig:kernel_perceptron}
\end{figure}

\begin{corollary}
\label{cor:l2generalization}
Let $I$ denote the set of rounds at which the Perceptron algorithm
makes an update when processing a sequence of training instances
$\x_1, \ldots, \x_T \in \Rset^N$.  For any $\rho > 0$ and any $\u \in
\Rset^{N}$ with $\| \u \| \leq 1$, consider the vector of $\rho$-hinge
losses incurred by $\u$: $\L_{\rho}(\u) = \big[ (1 - \frac{y_t (\u
  \cdot \x_t)}{\rho})_+ \big]_{t \in I}$. Then, for any $\delta > 0$,
with probability at least $1 - \delta$, the following generalization
bound holds for the penalized risk minimizing Perceptron hypothesis
$\h \w$:
\begin{multline*}
\Pr_{(x, y) \sim D}[y (\h \w \cdot \x) < 0]  \\
  \leq 
\inf_{\rho > 0, \|\u\| \leq 1}
  \frac{1}{T} \left( \frac{\| \L_\rho(\u) \|_2}{2} + 
  \sqrt{\frac{\| \L_\rho(\u) \|_2^2}{4} +
     \frac{\sqrt{\sum_{t \in I} \|\x_t\|^2}}{\rho}}
  \right)^2\\
 + 6 \sqrt{\frac{1}{T} \log \frac{2 (T + 1)}{\delta}} \,.
\end{multline*}
\end{corollary}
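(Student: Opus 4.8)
The plan is to combine the online-to-batch conversion of Theorem~\ref{th:online2batch_convex} with the $L_2$-norm mistake bound of Theorem~\ref{th:l2}, exactly as Corollary~\ref{cor:l1generalization} combines Theorem~\ref{th:online2batch_convex} with Theorem~\ref{th:l1}. First I would apply Theorem~\ref{th:online2batch_convex} with $L$ taken to be the zero-one loss $z \mapsto \1_{z < 0}$, which is bounded by one and, crucially, is not required to be convex by that theorem; the sequence of hypotheses $h_1, \dots, h_T$ is the sequence of intermediate weight vectors $\w_1, \dots, \w_T$ produced by the Perceptron algorithm of Figure~\ref{fig:perceptron_algorithm} on $S$, so that $h_i(x_i) = \w_i \cdot \x_i$, and $\h\w$ is by definition the weight vector of the penalized risk minimizing hypothesis $\h h$. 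Since $\E_{(x,y)\sim D}[L(y\,\h h(x))] = \Pr_{(x,y)\sim D}[y(\h\w\cdot\x)<0]$, Theorem~\ref{th:online2batch_convex} gives, with probability at least $1-\delta$,
\[
\Pr_{(x,y)\sim D}[y(\h\w\cdot\x)<0] \leq \frac{1}{T}\sum_{i=1}^T \1_{y_i(\w_i\cdot\x_i)<0} + 6\sqrt{\frac{1}{T}\log\frac{2(T+1)}{\delta}} \,.
\]

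Next I would bound the empirical term by the number of updates. The Perceptron performs an update at round $i$ exactly when $\sgn(\w_i \cdot \x_i) \neq y_i$, which in particular holds whenever $y_i(\w_i\cdot\x_i) < 0$; hence $\{\, i : y_i(\w_i\cdot\x_i) < 0 \,\} \subseteq I$ and therefore $\sum_{i=1}^T \1_{y_i(\w_i\cdot\x_i)<0} \leq |I| = M_T$. Consequently the right-hand side above is at most $\frac{M_T}{T} + 6\sqrt{\frac{1}{T}\log\frac{2(T+1)}{\delta}}$.

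Finally, I would invoke Theorem~\ref{th:l2}, a deterministic statement about this same run of the algorithm, to bound $M_T$ from above by $\inf_{\rho>0,\|\u\|\le1}\big(\frac{\|\L_\rho(\u)\|_2}{2}+\sqrt{\frac{\|\L_\rho(\u)\|_2^2}{4}+\frac{\sqrt{\sum_{t\in I}\|\x_t\|^2}}{\rho}}\big)^2$. Substituting this for $M_T$, dividing by $T$, and moving the infimum over $(\rho,\u)$ to the front — legitimate since the inequality holds for every fixed admissible pair — yields precisely the claimed bound. I do not anticipate a genuine obstacle: the two points that require a little care are that Theorem~\ref{th:online2batch_convex} does not demand convexity of $L$, so the zero-one loss is an admissible choice, and the distinction between a Perceptron mistake in the sense of $y_i(\w_i\cdot\x_i)<0$ and an actual update, which is absorbed by the set inclusion $\{i : y_i(\w_i\cdot\x_i)<0\} \subseteq I$. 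All the randomness enters through Theorem~\ref{th:online2batch_convex}; everything else holds deterministically once the sample is fixed.
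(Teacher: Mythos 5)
Your proposal is correct and follows exactly the route the paper intends: the paper gives no separate proof of this corollary but explicitly states that Theorem~\ref{th:online2batch_convex} with the (non-convex) zero-one loss can be combined with any of the mistake bounds, and your instantiation with Theorem~\ref{th:l2}, including the observation that the strict mistakes form a subset of the update rounds $I$ and that the infimum over $(\rho, \u)$ can be taken after the high-probability event is fixed, is precisely that argument. No gaps.
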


\section{Kernel Perceptron algorithm}

The Perceptron algorithm of Figure~\ref{fig:perceptron_algorithm} can
be straightforwardly extended to define a non-linear separator using a
positive definite kernel $K$ \citep{AizermanBravermanRozonoer64}.
Figure~\ref{fig:kernel_perceptron} gives the pseudocode of that
algorithm known as the \emph{kernel Perceptron algorithm}. The
classifier $\sgn(h)$ learned by the algorithm is defined by $h \colon
x \mapsto \sum_{t = 1}^T \alpha_t y_t K(x_t, x)$. The results of the
previous sections apply similarly to the kernel perceptron algorithm
with $\| \x_t \|^2$ replaced with $K(x_t, x_t)$. In particular,
the quantity $\sqrt{\sum_{t \in I} \|\x_t\|^2}$ appearing in several
of the learning guarantees can be replaced with the familiar trace
$\Tr[\K]$ of the kernel matrix $\K = [K(x_i, x_j)]_{i, j \in I}$
over the set of points at which an update is made, which is a
standard term appearing in margin bounds for kernel-based hypothesis
sets.

\bibliographystyle{plainnat}
\bibliography{pmb}
\end{document}